\documentclass{article} 
\usepackage{iclr2018_conference,times}
\usepackage{hyperref}
\usepackage{url}
\usepackage{dsfont}
\usepackage{amsmath}
\usepackage{amssymb}
\usepackage{amsthm}
\usepackage{subcaption}

\usepackage{algorithm}
\usepackage{algorithmic}

\usepackage{placeins}

\renewcommand{\P}{{\mathcal P}}
\newcommand{\Q}{{\mathcal Q}}
\newcommand{\X}{{\mathcal X}}
\newcommand{\Z}{{\mathcal Z}}
\newcommand{\R}{{\mathbb R}}
\renewcommand{\Re}{{\mathbb R}}

\newcommand{\Um}{{\mathbf U}}
\newcommand{\Vm}{{\mathbf V}}

\newcommand{\z}{{\mathbf z}}
\newcommand{\x}{{\mathbf x}}

\usepackage{todonotes}

\newtheorem{claim}{Claim}
\newtheorem{proposition}{Proposition}

\title{Flexible Prior Distributions for Deep Generative Models}

\author{Yannic Kilcher, Aur\'elien Lucchi, Thomas Hofmann  \\
Department of Computer Science\\
ETH Zurich\\
\texttt{\{yannic.kilcher,aurelien.lucchi,thomas.hofmann\}@inf.ethz.ch}
}

\iclrfinalcopy 
\arxiv

\begin{document}

\maketitle

\begin{abstract}
We consider the problem of training generative models with deep neural networks
as generators, i.e. to map latent codes to data points. Whereas the dominant
paradigm combines simple priors over codes with complex deterministic models,
we argue that it might be advantageous to use more flexible code distributions. We demonstrate how these distributions can be induced directly from the data. The benefits include: more powerful generative models, better modeling of latent
structure and explicit control of the degree of generalization.
\end{abstract}

\section{Introduction}

Generative models have recently moved to the center stage of deep learning in their own right. Most notable is the seminal work on Generative Adversarial Networks (GAN) \citep{Goodfellow:2014td} as well as probabilistic architectures known as Variational Autoencoder (VAE) \citep{Kingma:2013tz,Rezende:2014vm}. Here, the focus has moved away from density estimation and towards generative models that -- informally speaking -- produce samples that are perceptually indistinguishable from samples generated by nature. This is particularly relevant in the context of high-dimensional signals such as images, speech, or text.

Generative models like GANs typically define a generative model via a deterministic generative mechanism or generator $G_\phi: \Re^d \to \Re^m$, $\z \mapsto G(\z) =\x$, parametrized by $\phi$. They are often implemented as a deep neural network (DNN), which is hooked up to a code distribution $\z \sim \P_\z$, to induce a distribution $\x \sim \P_\x$. It is known that under mild regularity conditions, by a suitable choice of generator, any $\mathcal P_\x$ can be obtained from an arbitrary \textit{fixed} $\mathcal P_\z$ \citep{kallenberg2006foundations}. Relying on the power and flexibility of DNNs, this has led to the view that code distributions should be simple and \textit{a priori} fixed, e.g.~$\P_\z = \mathcal N(\mathbf 0, \mathbf I)$. As shown in \cite{arjovsky2017towards}  for DNN generators, $\text{Im}(G_\phi)$ is a countable union of manifolds of dimension $d$ though, which may pose challenges, if $d<m$. Whereas a current line of research addresses this via alternative (non-MLE or KL-based) discrepancy measures between distributions \citep{Dziugaite:2015wd, nowozin2016f,arjovsky2017wasserstein}, we investigate an orthogonal direction:
\begin{claim}
\label{claim:1}
It is advantageous to increase the modeling power of a generative model, not only via $G_\phi$, but by using more flexible prior code distributions $\P_\z$.  
\end{claim}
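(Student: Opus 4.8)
The plan is to recognize that Claim~\ref{claim:1} is a qualitative research assertion rather than a formal theorem, so establishing it means combining a theoretical separation argument with a controlled empirical study, where the word \emph{advantageous} is made precise along the three axes promised in the abstract: generative fidelity, faithfulness of latent structure, and controllable generalization. First I would fix a notion of modeling power: for a generator capacity class $\mathcal G$ and a target $\P_\x$, compare $\inf_{G \in \mathcal G} D(\P_\x, G_\# \P_\z)$ taken over a fixed simple prior $\P_\z = \mathcal N(\mathbf 0, \mathbf I)$ against the same infimum when $\P_\z$ is also allowed to range over a flexible family, where $D$ is the discrepancy the training objective minimizes and $G_\# \P_\z$ is the pushforward. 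The universality result of \citep{kallenberg2006foundations} forces the two infima to coincide once $\mathcal G$ is unrestricted, so the entire argument must live in the finite-capacity regime; this is the conceptual crux.

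The theoretical core I would build on the observation from \cite{arjovsky2017towards} that $\text{Im}(G_\phi)$ is a countable union of $d$-dimensional manifolds. I would construct an explicit target supported on a disconnected or topologically nontrivial set, such as several well-separated class clusters, and show that a continuous generator pushing a unimodal connected prior $\P_\z$ forward onto $\X$ must place mass on the low-density region bridging the components, incurring an irreducible value of $D$ that scales with the capacity needed to make the bridge thin. I would then exhibit a flexible $\P_\z$ with disconnected or multimodal support for which a generator in the \emph{same} class $\mathcal G$ attains strictly smaller $D$, yielding a genuine separation. The delicate step is controlling $\mathcal G$ honestly, so the gain is attributed to the prior rather than smuggled in through added generator expressiveness.

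On the constructive side I would specify the procedure for inducing $\P_\z$ directly from the data, so the flexible prior is not merely hypothesized but obtainable, and argue that the induced prior concentrates on the region of $\Z$ that the generator actually exploits, sharpening the match to the empirical manifold. The empirical leg would then validate each axis on \datasetsb: a fidelity metric to corroborate the separation, interpolation and clustering in $\Z$ to probe latent structure, and a sweep over the flexibility of $\P_\z$ to expose the resulting generalization trade-off explicitly.

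The main obstacle, as flagged above, is the tension with the universality theorem: since any $\P_\x$ is reachable from any fixed $\P_\z$ in the infinite-capacity limit, every claimed advantage is necessarily a statement about bounded capacity, finite samples, or optimization difficulty, and rendering that quantitative rather than merely illustrative is where the real work lies. A secondary hazard is confounding; an honest demonstration must hold generator architecture, parameter count, and training budget fixed across the simple-prior and flexible-prior conditions so that the measured improvement cleanly isolates the effect of $\P_\z$.
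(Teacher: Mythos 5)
Your proposal is coherent but takes a genuinely different route from the paper. Claim~\ref{claim:1} has no formal proof in the paper; its support is the KL decomposition of Equation~\ref{eqn:kldiv}, which splits the joint divergence into a prior-agreement term plus a conditional term, together with the argument that a sufficiently powerful generator makes the conditional term negligible, so that prior disagreement alone measures model quality. The paper operationalizes this as the PAG score of Equation~\ref{eqn:pag}, estimated by fitting a diagonal Gaussian to codes recovered by generator reversal, and then argues the claim empirically: reversed codes exhibit multimodal structure that a unimodal Normal cannot capture (Figure~\ref{fig:structure:mnist}), samples likely under the Normal prior but unlikely under the data-induced prior are visually poor (Figure~\ref{fig:exp:disagree}), PAG tracks visual quality and the Inception score (Figure~\ref{fig:exp:best}), and retraining with a PGAN-learned induced prior improves both (Figure~\ref{fig:exp:pgan}). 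You instead propose a capacity-constrained separation theorem: holding the generator class fixed, a continuous generator pushing a connected unimodal prior onto a disconnected target must place mass on bridges between components, whereas a multimodal prior in the same class avoids that cost. This topological argument appears nowhere in the paper, and conversely the paper's central device --- the KL decomposition and the computable PAG diagnostic it yields --- appears nowhere in your plan. Your route, if completed, would buy something the paper lacks: a rigorous finite-capacity statement cleanly isolating the prior's contribution (you rightly flag that making it quantitative is the hard, unfinished part); the paper's route buys a practical, label-free diagnostic applicable to any trained GAN plus a concrete remedy, at the cost of the claim resting on empirical correlation rather than a theorem. The two approaches do converge on two elements: constructing the flexible prior directly from data by inverting the generator, and latent-space clustering as evidence of multimodality. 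Note also a subtle difference in where each locates the failure of universality: you attribute it to bounded generator capacity, while the paper assumes the generator itself is adequate and blames either the non-convexity of finding $\theta^*$ or the fixed prior, which is precisely what Claim~\ref{claim:1} asserts.
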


Another potential benefit of using a flexible latent prior is the ability to reveal richer structure (e.g.~multimodality) in the latent space via $\P_\z$, a view which is also supported by evidence on using more powerful posterior distributions  \citep{mescheder2017adversarial}. This argument can also be understood as follows. Denote by $\Q_\x$ the distribution induced by the generator. Our goal is to ensure the $\Q_\x$ distribution matches the true data distribution $\P_\x$. This brings us to consider the KL-divergence of the joint distributions which can be decomposed as
\begin{equation}
    \text{KL}(\P(\x,\z)\|\Q(\x,\z)) = \text{KL}(\P(\z)\|\Q(\z)) + \mathds{E}_{\z}\text{KL}(\P(\x|\z)\|\Q(\x|\z)).
\label{eqn:kldiv}
\end{equation}

Assuming that the generator is powerful enough to closely approximate the data's generative process, then the contribution of the term $\text{KL}(\P(\x|\z) \| \Q(\x|\z))$ vanishes or becomes extremely small, and what remains is the divergence between the priors.
This means that in light of using powerful neural networks to model $\Q(\x|\z)$, the prior agreement becomes a way to assess the quality of our learned model.

Empowered by this quantitative metric to evaluate the modeling power of a generative model, we will demonstrate some deficiencies in the assumption of using an arbitrary fixed prior such as a Normal distribution. We will further validate this observation by demonstrating that a flexible prior can be learned from data by mapping back the data points to their latent space representations. This procedure relies on a (trained) generator to compute an approximate inverse map $H: \Re^m \to \Re^d$ such that $H \circ G_\phi \approx \text{id}$. 
\begin{claim}
The generator $G_\phi$ implicitly defines an approximate inverse, which can be computed with reasonable effort using gradient descent and without the need to co-train a recognition network. We call this approach generator reversal.
\label{claim:2}
\end{claim}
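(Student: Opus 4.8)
The plan is to make the informal statement precise by \emph{defining} the inverse as the solution of a per-example optimization problem, and then arguing separately that it is computable by gradient descent and that it behaves as an approximate inverse. Concretely, for a target $\x \in \Re^m$ I would set
\begin{equation}
    H(\x) = \argmin_{\z \in \Re^d} \ell(\z;\x), \qquad \ell(\z;\x) = \tfrac12\|G_\phi(\z) - \x\|^2,
\label{eqn:reversal}
\end{equation}
and realize $H(\x)$ as the output of gradient descent $\z_{t+1} = \z_t - \eta\,\nabla_\z \ell(\z_t;\x)$ started from $\z_0 \sim \P_\z$. Because $G_\phi$ is a differentiable DNN, the gradient $\nabla_\z \ell = J_{G_\phi}(\z)^\top(G_\phi(\z)-\x)$ is delivered by ordinary backpropagation through the \emph{frozen} generator, and each iteration costs a single forward/backward pass with no extra parameters; this discharges the ``gradient descent, reasonable effort, no recognition network'' part of the claim outright.

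For the approximate-inverse property I would reason about the loss landscape. Fix any $\z^\star$ and set $\x = G_\phi(\z^\star) \in \text{Im}(G_\phi)$. Then $\ell(\cdot;\x) \ge 0$ attains its global minimum value $0$ at $\z^\star$, so if the descent reaches a global minimizer $\hat\z$ we have $\ell(\hat\z;\x)=0$, i.e.\ $G_\phi(\hat\z)=\x$; hence $G_\phi \circ H = \text{id}$ on $\text{Im}(G_\phi)$ and $H$ is an exact right inverse there. More generally, at any stationary point the residual satisfies the normal equation $J_{G_\phi}(\hat\z)^\top(G_\phi(\hat\z)-\x)=0$, i.e.\ $G_\phi(\hat\z)-\x$ is orthogonal to the tangent space $\mathrm{col}(J_{G_\phi}(\hat\z))$ of the image manifold. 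For $\x \notin \text{Im}(G_\phi)$ this is precisely the first-order condition for $G_\phi(\hat\z)$ to be a locally nearest point of the image to $\x$, which is exactly the approximate-inversion behavior asserted.

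The hard part is the reverse composition $H \circ G_\phi \approx \text{id}$, where the two structural weaknesses of the construction concentrate. First, $\ell(\cdot;\x)$ is \emph{non-convex}, so gradient descent is only guaranteed to reach a stationary point; controlling this needs either a local strong-convexity or Polyak-type curvature assumption near $\z^\star$, or an argument that the basin reached from $\z_0 \sim \P_\z$ contains the relevant minimizer. Second, and more fundamentally, $G_\phi$ need not be \emph{injective}: distinct codes can share an image, so even the exact minimizer $\hat\z$ may lie elsewhere in the fiber $G_\phi^{-1}(\x)$ and differ from $\z^\star$, whence $H \circ G_\phi = \text{id}$ cannot hold as a strict identity. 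I would therefore state the result as an approximate identity and quantify it locally: on a neighborhood where $J_{G_\phi}$ has full column rank with least singular value $\sigma_{\min} > 0$, the inverse function theorem yields a local bi-Lipschitz bound $\|\hat\z - \z^\star\| \lesssim \sigma_{\min}^{-1}\|G_\phi(\hat\z) - \x\|$, so accurate reconstruction of $\x$ forces accurate recovery of $\z^\star$; a mild regularizer on $\z$ or the choice $\z_0 \sim \P_\z$ then selects a canonical preimage. Establishing this bound together with a non-convex convergence guarantee is the principal obstacle, whereas the construction and the right-inverse direction are comparatively routine.
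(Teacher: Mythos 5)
Your construction is essentially the one the paper itself uses: its Algorithm 1 (Generator Reversal) is exactly your gradient descent on $\ell(\z,\x)=\tfrac12\|G_\phi(\z)-\x\|^2$ backpropagated through the frozen generator, and its only formal support for the claim is an appendix proposition that mirrors your local analysis --- at a point $\x_* = G(\z_*)$ with $G$ locally invertible around $\z_*$, the Hessian of the loss at $\z_*$ reduces to the Gauss--Newton term (the second-order term vanishes with the residual), from which the paper concludes local convexity. The difference lies in what is actually established. The paper's proof stops at positive \emph{semi}definiteness of that Hessian, which is automatic at any global minimizer of a smooth nonnegative loss and does not by itself give convexity on a neighborhood; the local-invertibility hypothesis is invoked there only to say $J_G(\z_*)\neq \mathbf{0}$. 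Your version --- full column rank, $\sigma_{\min}(J_{G_\phi})>0$ --- is what that hypothesis really buys: local \emph{strong} convexity plus the bi-Lipschitz recovery bound $\|\hat\z-\z^\star\| \lesssim \sigma_{\min}^{-1}\|G_\phi(\hat\z)-\x\|$, so your write-up repairs rather than repeats the paper's argument, and your exact right-inverse statement on $\mathrm{Im}(G_\phi)$ and the normal-equation/nearest-point reading for $\x\notin\mathrm{Im}(G_\phi)$ have no counterpart in the paper. Conversely, the two obstacles you correctly flag as the hard part --- non-convexity of the landscape and non-injectivity of $G_\phi$ --- are handled by the paper not mathematically but empirically and by conjecture: it shows reversal already finds reasonable preimages for \emph{random-weight} generators (its Appendix C), conjectures that training the generator improves preimage quality, and remarks that an $\ell_2$ regularizer on $\z$ would give local strong convexity. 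In short, your route buys quantitative local guarantees where the paper's proposition is formally too weak, while the paper buys global plausibility through experiments that your proposal would have to replace with a genuinely difficult non-convex convergence argument.
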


Note that, if the above argument holds, we can easily find latent vectors $\z=H(\x)$ corresponding to given observations $\x$.  This then induces an empirical distribution of "natural" codes. An extensive set of experiments presented in this paper reveals that this induced prior yields strong evidence of improved generative models. Our findings clearly suggest that further work is needed to develop flexible latent prior distributions in order to achieve generative models with greater modeling power.

\section{Measuring the Modeling Power of the Latent Prior}

\subsection{Gradient--Based Reversal}
\label{sec:reversal}

Let us begin with making Claim \ref{claim:2} more precise. Given a data point $\x$, we aim to compute some approximate code $H(\x)$ such that $(G \circ H)(\x) = G(H(\x)) =: \tilde \x \approx \x$. We do so by simple gradient descent, starting from some random initialization for $\z$ (see Algorithm \ref{alg:rgen}).
\begin{algorithm}
    \caption{Generator Reversal}\label{alg:rgen}
    \begin{algorithmic}[1]
        \INPUT Data point $\x$, loss function $\ell$, initial value $\z_0$
        \STATE Initialize $\z \leftarrow \z_0$
        \REPEAT
        \STATE $\hat{\x} = G_\phi(\z)$  \quad\quad\quad\quad \COMMENT{run generator}
	    \STATE $\z \leftarrow \z - \eta \nabla_\z \ell(\z, \x), \;\; \ell(\z, \x) = \hat{\ell}(\hat{\x}, \x)$ \quad\quad \COMMENT{backpropagate error}
        \UNTIL{converged}
	\OUTPUT latent code $\z$
    \end{algorithmic}
\end{algorithm}

Section~\ref{sec:convergence_reversal} in the Appendix demonstrates that the generator reversal approach presented in Algorithm~\ref{alg:rgen} ensures local convergence of gradient descent for a suitable choice of loss function.

Given the generator reversal procedure presented in Algorithm~\ref{alg:rgen}, a key question we would like to address is whether good (low loss) codevectors exist for data points $\x$. First of all, whenever $\x$ was actually generated by $G_\phi$, then surely we know that a perfect, zero-loss pre-image $\z$ exists. Of course finding it exactly would require the exact inverse function of the generator process but our experiments demonstrate that, in practice, an approximate answer is sufficient.

Secondly, if $\x$ is  in the training data, then as $G_\phi$ is trained to mimic the true distribution, it would be suboptimal if any such $\x$ would not have a suitable pre-image. We thus conjecture that learning a good generator will also improve the quality of the generator reversal, at least for points $\x$ of interest (generated or data).
Note that we do not explicitly train the generator to produce pre-images that would further optimize the training objective. This would require backpropagating through the reversal process which is certainly possible and would likely yield further improvements.

Anecdotally, we have found the generator reversal procedure to be quite effective at finding reasonable pre-images of data samples even for generators initialized for random weights. Some empirical results are provided in Section~\ref{sec:random_network} of the Appendix.

\subsection{Measuring Prior Agreement}

Modeling the distribution of a complex set of data requires a rich family of distributions which is typically obtained by introducing latent variables $\z$ and modeling the data as
$
\P(\x) = \int_\z \P(\z) \P(\x \mid \z) d\z.
$
Often the prior $\P(\z)$ is assumed to be a Normal distribution, i.e. $\P(\z) \sim \mathcal{N}(0, \sigma^2\mathbf{I})$.

This principle underlies most modern generative models such as GANs, effectively turning the generation process as mapping latent noise $\z$ to observed data $\x$ through a generative model $\P(\x \mid \z)$. In GANs, the generative model - or \emph{generator} - is a neural network parameterized as $\P_\theta(\x \mid \z)$, and optimized to find the best set of parameters $\theta^*$. Note that an implicit assumption that is made by this modeling choice is that the generative process is adequate and therefore sufficiently powerful to find $\theta^*$ in order to reconstruct the data $\x$. For GANs to work, this assumption has to hold despite the fact that the prior is kept fixed. In theory, such generator does exist as neural networks are universal approximators and can therefore approximate any function. However, finding $\theta^*$ requires solving a high-dimensional and non-convex optimization problem and is therefore not guaranteed to succeed.

We here explore an orthogonal direction. We assume that we have found a suitable generative model $\P_{\theta^*}(\x \mid \z)$ that could produce the data distribution $\P(\x)$ but the prior is not appropriate. We would like to quantify to what degree does the assumed prior $\P(\z) \sim \mathcal{N}(0, \sigma^2\mathbf{I})$ disagrees with the \emph{data induced} prior therefore measuring how well our generated distribution agrees with the data distribution. Our goal in doing so is not to propose a new training criterion, but rather to assess the quality of our generative model by measuring the quality of the prior.

\paragraph{Prior Agreement (PAG) Score}
We consider the standard case where $\P(\z)$ is modeled as a multivariate Normal distribution $\mathcal{N}(0, \sigma^2\mathbf{I})$ with diagonal uniform covariance. Our goal is to measure the disagreement between this prior and some more suitable prior. The latter not being known a-priori, we instead settle for a multivariate Normal with diagonal covariance $\mathcal{N}(0, \mathbf{\Sigma}), \mathbf{\Sigma} := \text{diag}(\nu^2_i)$ where the $\nu_i$ are inferred from a trained generator as detailed below. This choice of prior will allow for a simple computation of divergence as follows:
\begin{equation}
    \label{eqn:pag}
\begin{aligned}
    \text{KL}(\mathcal{N}(0, \mathbf{\Sigma})\|\mathcal{N}(0,\sigma^2\mathbf{I})) &= \frac{1}{2}\left(\frac{1}{\sigma^2}\text{tr}(\Sigma) - d - (\log |\Sigma| - \log \sigma^{2d}) \right) \\
    &= \frac{1}{2}\left(\frac{1}{\sigma^2}\text{tr}(\Sigma) - d - \log \prod_i^d \frac{\nu^2_i}{\sigma^2} \right) \\
    &= \frac{1}{2}\left(\sum_i^d \frac{\nu^2_i}{\sigma^2} - d - \sum_i^d \log \frac{\nu^2_i}{\sigma^{2}}\right) \\
    &= \frac{1}{2} \sum_i^d \left[\frac{\nu^2_i}{\sigma^2} - \log \frac{\nu^2_i}{\sigma^{2}} - 1\right]
\end{aligned}
\end{equation}

The divergence defined in Equation~\ref{eqn:pag} defines the \emph{Prior Agreement (PAG) Score}. It requires the quantities $\nu^2_i$ which can easily be computed by mapping the data to the latent space using the reversal procedure described in Section~\ref{sec:reversal} and then performing an SVD decomposition on the resulting latent vectors $\hat{\z} = \text{Generator Reversal}(\x)$. The $\nu_i$ then correspond to the singular values $\textit{diag}(\Sigma)$ obtained in the SVD decomposition $\hat{\z} = \Um \Sigma \Vm^*$.

Note that more complex choices as a substitute for the data induced prior would allow for a better characterization of the inadequacy of the Normal prior with uniform covariance. We will however demonstrate that the PAG score defined in Equation~\ref{eqn:pag} is already effective at revealing surprising deficiencies in the choice of the Normal prior.
 
\section{Learning the Data Induced Prior}
\label{sec:pgan}

\begin{figure}
    \centering
    \includegraphics[width=0.8\linewidth]{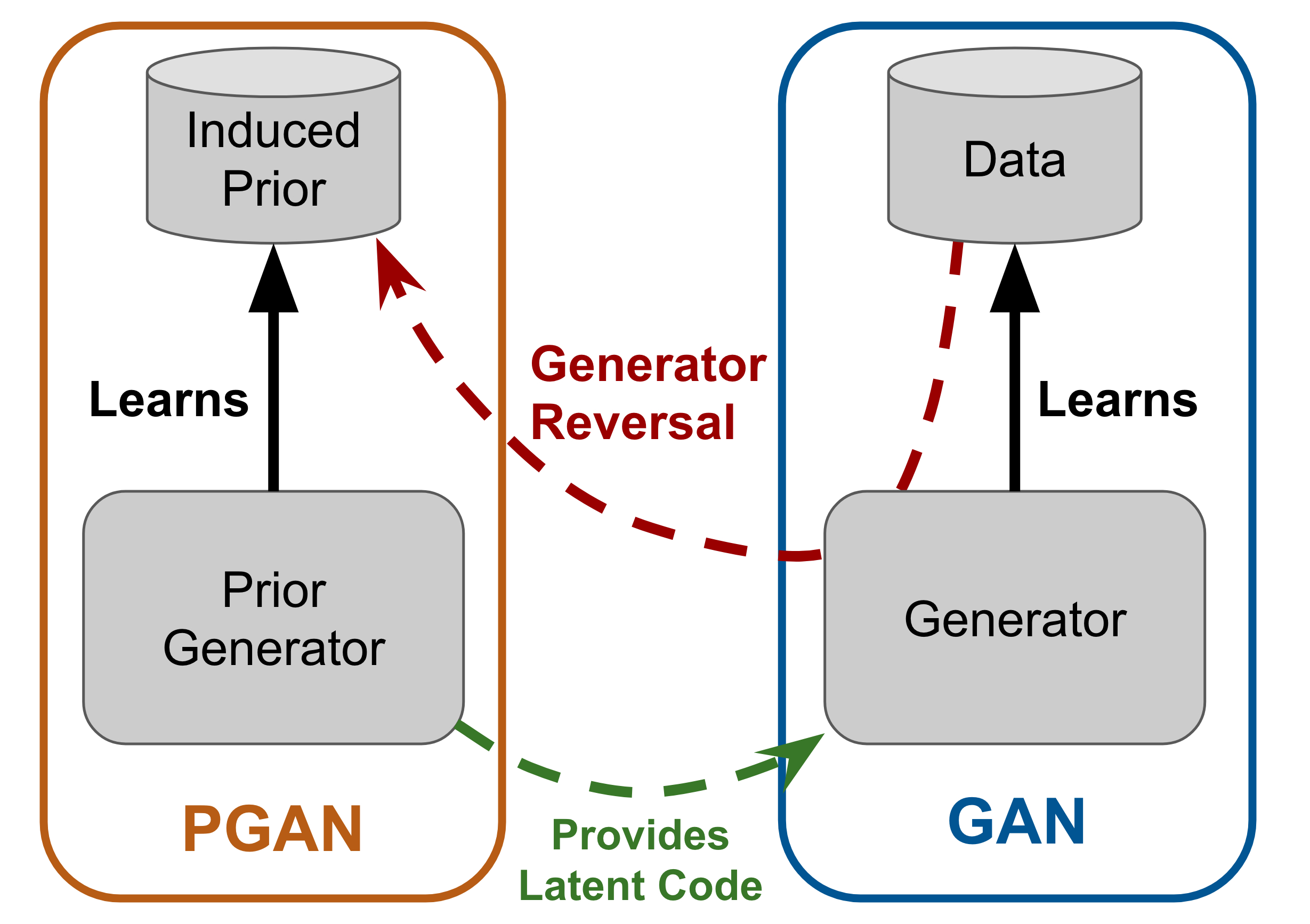}
    \caption{A data induced prior distribution is learned using a secondary GAN named {\sc PGAN}. This prior is then used to further train the original GAN.}
    \label{fig:pgan}
\end{figure}

So far, we have introduced a way to characterize the fit of a chosen prior $\P(\z)$ to model the data distribution $\P(\x)$ given a trained generator $\P(\x \mid \z)$. Equipped with this agreement score, we now turn our attention to designing a method to address the potential problems that could arise from choosing an inappropriate prior $\P(\z)$. As shown in Figure~\ref{fig:pgan}, we suggest learning the data induced prior distribution using a secondary GAN we name {\sc PGAN} which learns a mapping function $h: \mathcal{Z}' \to \mathcal{Z}$ where $\mathcal{Z}' \subseteq \mathds{R}^{d'}$ is an auxiliary latent space with the same or higher ambient dimension as the original latent space $\mathcal{Z} \subseteq \mathds{R}^d$. The mapping $h$ defines a transformation of the noise vectors in order to match the data induced prior. Note that training the mapping function $h$ is done by keeping the original GAN unchanged, thus we only need to run the reversal process once for the dataset and then the reverted data in the latent space becomes the target to train $h$.

If the original prior over the latent space $\mathcal Z$ is indeed not a good choice to model $\P(\x)$, we should see evidence of a better generative model by sampling from the transformed latent space $\mathcal{Z}'$. Such evidence including better quality samples, less outliers and higher PAG scores are shown in Figure~\ref{fig:exp:pgan} (see details in Section~\ref{sec:experiments}). Note that having obtained this mapping opens the door to various schemes such as multiple rounds of re-training the original GAN and training the PGAN using the newly learned prior or training yet another PGAN to match the data induced prior of the first PGAN.
We leave these practical considerations to future work, as our goal is simply to provide a method to quantify and remedy the fundamental problem of prior disagreement.
 
\section{Experiments}
\label{sec:experiments}

The experimental results presented in this section are based on off-the-shelf GAN architectures whose details are provided in the appendix. We restrict the dimension of the latent space to $d = 20$. Although similar experimental results can be obtained with latent spaces of larger dimensions, the low-dimensional setting is particularly interesting as it requires more compression, providing an ideal scenario to empirically verify the fitness of the fixed Gaussian prior with respect to the data induced prior.

\subsection{Mapping a dataset to the latent space}

Given a generator network, we first map 1024 data points from the MNIST dataset to the latent space using the Generator Reversal procedure. We then use t-SNE to reduce the dimensionality of the latent vectors to 2 dimensions. We perform this procedure for both an untrained and a fully-trained networks and show the results in Figure~\ref{fig:structure:mnist}. One can clearly see a multi-modal structure emerging after training the generator network, indicating that a unimodal Normal distribution is not an appropriate choice as a prior over the latent space.

\begin{figure}[t!]
\begin{center}
    \includegraphics[width=\columnwidth/3]{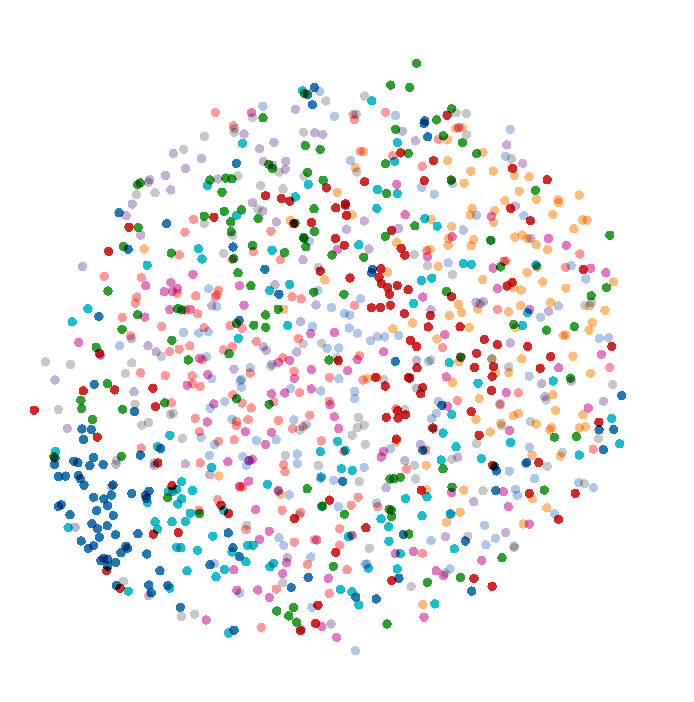}
    \includegraphics[width=\columnwidth/3]{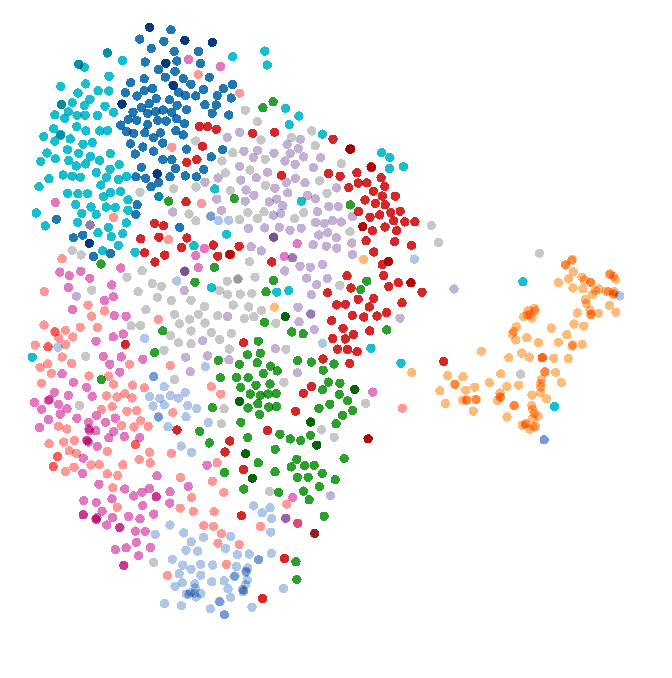}
\caption{\small{\it Generator reversal on a sample of 1024 MNIST digits. Projections of data points with 
an untrained (left) and a fully trained GAN (right). Colors represent the respective class labels. The ratios of between-cluster distances to within-cluster distances are 0.1 (left) and 1.9 (right). }}
\label{fig:structure:mnist}
\end{center}
\end{figure} 

\subsection{Prior-Data-Disagreement samples}
In order to demonstrate that a simple Normal prior does not capture the data induced prior, we sample points that are likely under the Normal prior but unlikely under the data induced prior. This is achieved by sampling a batch of 1000 samples from the Normal prior, then ordering the samples according to their mean squared distance to the found latent representations of a batch of data. 
Figure~\ref{fig:exp:disagree} shows the the top 20 samples in the data space (obtained after mapping the top 20 latent vectors to the data space using the generator). The poor quality of these samples indicate that the induced prior assigns loss probability mass to unlikely samples.

\begin{figure}[ht!]
    \centering
    \begin{subfigure}[t]{0.4\columnwidth}
    \centering
    \includegraphics[width=\columnwidth]{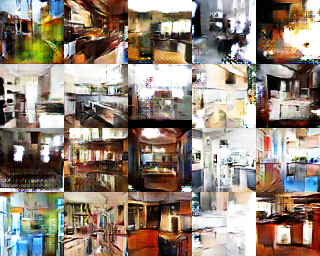}
        \caption{LSUN kitchen}
    \end{subfigure}
    \begin{subfigure}[t]{0.4\columnwidth}
    \centering
    \includegraphics[width=\columnwidth]{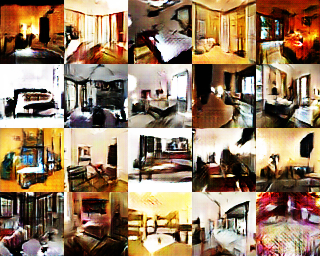}
        \caption{LSUN bedrooms}
    \end{subfigure}
    \begin{subfigure}[b]{0.4\columnwidth}
    \centering
    \includegraphics[width=\columnwidth]{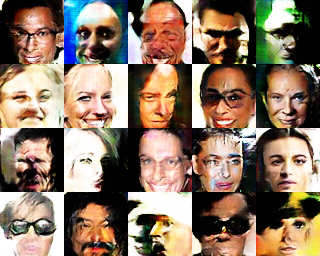}
        \caption{CelebA}
    \end{subfigure}
    \begin{subfigure}[b]{0.4\columnwidth}
    \centering
    \includegraphics[width=\columnwidth]{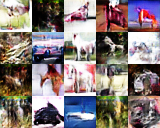}
        \caption{CIFAR 10}
    \end{subfigure}
    \caption{Prior-data-disagreement samples. We visualize samples for which the likelihood under the GAN prior is high, but low under the data-induced prior. Note that most of these samples are of poor visual quality and contain numerous artifacts.}
    \label{fig:exp:disagree}
\end{figure}

\subsection{Evaluating the PAG Score}

\begin{figure}[ht!]
    \centering
    \begin{subfigure}[t]{\columnwidth}
        \centering
    \includegraphics[width=.4\columnwidth]{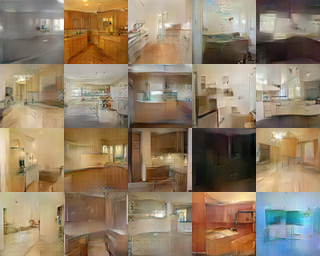}
    \includegraphics[width=.4\columnwidth]{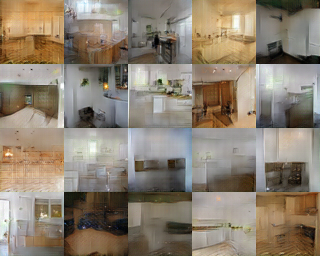}
        \caption{LSUN kitchen. (PAG: 1547 / 2001)}
    \end{subfigure}
    \begin{subfigure}[t]{\columnwidth}
        \centering
    \includegraphics[width=.4\columnwidth]{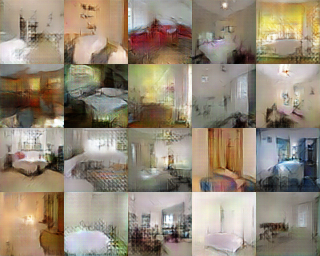}
    \includegraphics[width=.4\columnwidth]{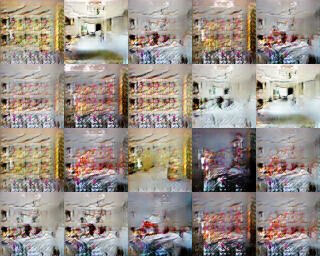}
        \caption{LSUN bedroom. (PAG: 1807 / 13309)}
    \end{subfigure}
    \begin{subfigure}[t]{\columnwidth}
        \centering
    \includegraphics[width=.4\columnwidth]{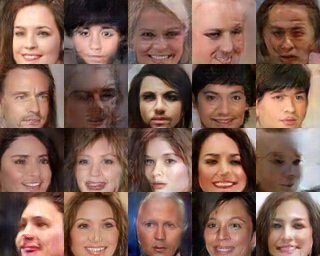}
    \includegraphics[width=.4\columnwidth]{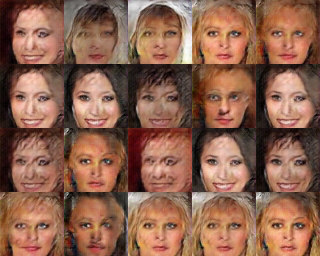}
        \caption{CelebA. (PAG: 11390 / 39144, Inception: 4.35 / 2.97)}
    \end{subfigure}
    \begin{subfigure}[t]{\columnwidth}
        \centering
    \includegraphics[width=.4\columnwidth]{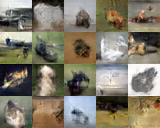}
    \includegraphics[width=.4\columnwidth]{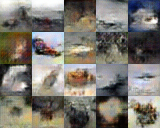}
        \caption{CIFAR 10. (PAG: 5021 / 5352, Inception: 6.30 / 5.94)}
    \end{subfigure}
    \caption{Best / worst selection of samples (by visual inspection) from a number of different GAN models. We also report the PAG and Inception Scores (when available). Note that the PAG score agrees with the Inception Score, but does not require labeled data to be evaluated.}
    \label{fig:exp:best}
\end{figure}

We now evaluate how the PAG score correlates with the visual quality of the samples produced by a generator. We first train a selection of 16 GAN models using different combinations of filter size, layer size and regularization constants. We then select the best and the worst model by visually inspecting the generated samples. We show the samples as well as the corresponding PAG scores in Figure~\ref{fig:exp:best}. These results clearly demonstrate that the PAG score strongly correlates with the visual quality of the samples. We also report the Inception score for datasets that provide a class label, and observe a strong agreement with the PAG scores.

\subsection{Learning the Data Induced Prior using a secondary GAN}

Following the procedure presented in Section~\ref{sec:pgan}, we train a GAN until convergence and then use the Generator Reversal procedure to map the dataset to the latent space, therefore inducing a \emph{data-induced prior}.
We then train a secondary GAN (called PGAN) to learn this prior from which we can then continue training the original GAN for a few steps.

We expect that the model trained with the data-induced prior will be better at capturing the true data distribution. This is empirically verified in Figure~\ref{fig:exp:pgan} by inspecting samples produced by the original and re-trained model. We also report the PAG scores for which we see a significant reduction therefore confirming our hypothesis of obtaining an improved generative model. Note that the data-induced prior yields more realistic and varied output samples, even though it uses the same dimensionality of latent space as the original simple prior.

\begin{figure}[h!]
    \centering
    \begin{subfigure}[t]{\textwidth}
        \centering
        \includegraphics[width=.4\columnwidth]{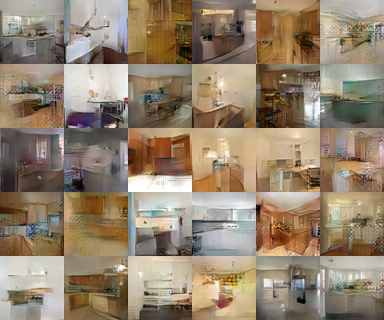}
        \includegraphics[width=.4\columnwidth]{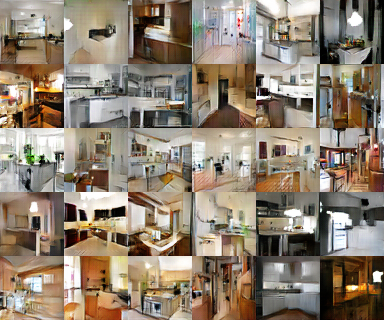}
        \caption{LSUN kitchen. (PAG: 1861 / 779)}
    \end{subfigure}
    \begin{subfigure}[t]{\textwidth}
        \centering
        \includegraphics[width=.4\columnwidth]{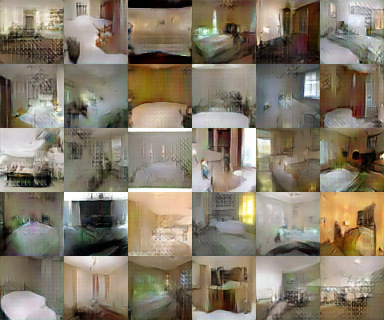}
        \includegraphics[width=.4\columnwidth]{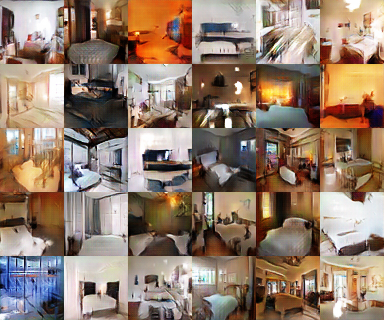}
        \caption{LSUN bedroom. (PAG: 1810 / 746)}
    \end{subfigure}
    \begin{subfigure}[t]{\textwidth}
        \centering
        \includegraphics[width=.4\columnwidth]{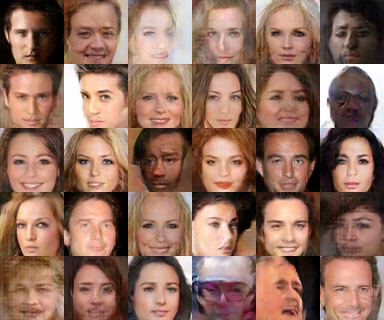}
        \includegraphics[width=.4\columnwidth]{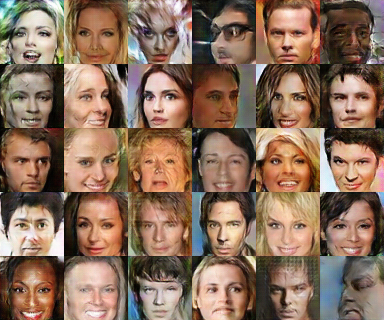}
        \caption{CelebA. (PAG: 2492 / 863, Inception: 4.64 / 4.62)}
    \end{subfigure}
    \begin{subfigure}[t]{\textwidth}
        \centering
        \includegraphics[width=.4\columnwidth]{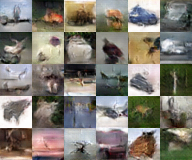}
        \includegraphics[width=.4\columnwidth]{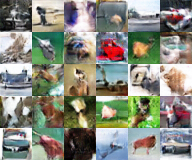}
        \caption{CIFAR 10. (PAG: 2485 / 1064, Inception: 6.24 / 6.59)}
    \end{subfigure}
    \caption{Samples before (left) and after (right) training with the data induced prior. Note the increased level of diversity in the samples obtained from the induced prior.}
    \label{fig:exp:pgan}
\end{figure}
 
\section{Related Work}

Our generator reversal is similar in spirit to~\cite{kindermann1990inversion}, but their intent differs as they use this technique as a tool to visualize the information processing capability of a neural network. Unlike previous works that require the transfer function to be bijective \cite{Baird:jr, Rippel:2013uq}, our approach does not strictly have this requirement, although this could still be imposed by carefully selecting the architecture of the network as shown in~\cite{dinh2016density, arjovsky2017wasserstein}.

In the context of GANs, other works have used a similar reversal mechanism as the one used in our approach, including e.g.~\cite{Che:2016wq, Dumoulin:2016td, donahue2016adversarial}. All these methods focus on training a separate \emph{encoder} network in order to map a sample from the data space to its latent representation. Our goal is however different as the reversal procedure is here used to estimate a more flexible prior over the latent space.

Finally, we note that the importance of using an appropriate prior for GAN models has also been discussed in~\cite{han2016alternating} which suggested to infer the continuous latent factors in order to maximize the data log-likelihood. However this approach still makes use of a simple fixed prior distribution over the latent factors and does not use the inferred latent factors to construct a prior as suggested by our approach.

\section{Conclusion}

We started our discussion by arguing that is advantageous to increase the modeling power of a generative model by using more flexible prior code distributions. We substantiated our claim by deriving a quantitative metric estimating the modeling power of a fixed prior such as the Normal prior commonly used when training GAN models. Our experimental results confirm that this measure reveals the standard choice of an arbitrary fixed prior is not always an appropriate choice. In order to address this problem, we presented a novel approach to estimate a flexible prior over the latent codes given by a generator $G_\phi$. This was achieved through a reversal technique that reconstruct latent representations of data samples and use these reconstructions to construct a prior over the latent codes. We empirically demonstrated that the resulting data-induced prior yields several benefits including: more powerful generative models, better modeling of latent structure and semantically more appropriate output.

\newpage

\bibliography{references}
\bibliographystyle{iclr2018_conference}

\newpage

{\Huge \textsc{Appendix}}

\appendix
\label{section:app}

\section{Derivation Equation~\ref{eqn:kldiv}}

\begin{equation}
    \label{eqn:kldiv_full}
\begin{aligned}
    \text{KL}(p(\x,\z)\|q(\x,\z)) &= \mathds{E}_{\x,\z}\log\frac{p(\x,\z)}{q(\x,\z)} \\
    &= \mathds{E}_{\x,\z}\log\frac{p(\x|\z)p(\z)}{q(\x|\z)q(\z)} \\
    &= \mathds{E}_{\x,\z}\log\frac{p(\z)}{q(\z)} + \mathds{E}_{\x,\z}\log\frac{p(\x|\z)}{q(\x|\z)} \\
    &= \mathds{E}_{\z}\log\frac{p(\z)}{q(\z)} + \mathds{E}_{\z}\mathds{E}_{\x|\z}\log\frac{p(\x|\z)}{q(\x|\z)} \\
    &= \text{KL}(p(\z)\|q(\z)) + \text{KL}(p(\x|\z)\|q(\x|\z))
\end{aligned}
\end{equation}

\section{Local Convergence of the Gradient--Based Reversal}
\label{sec:convergence_reversal}

Let us demonstrate that the generator reversal approach presented in Algorithm~\ref{alg:rgen} ensures local convergence of gradient descent for a suitable choice of loss function.

\begin{proposition}
We are given an $\ell_2$ loss function $\ell: \Z \times \X \to \R$ and a generator function $G: \Z \to \X$. Consider a point $\x_* = G(\z_*)$ and assume the function $G(\z)$ is locally invertible around $\z_*$~\footnote{Note that this is a less restrictive assumption than the diffeomorphism property required in~\cite{arjovsky2017towards}}. Then the reconstruction problem $\min_\z \ell(\z, \x)$  is locally convex at $\x_*$.
\end{proposition}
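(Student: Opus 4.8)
The plan is to establish local convexity by a direct second-order analysis of the objective. Writing the $\ell_2$ loss as $\ell(\z,\x) = \tfrac{1}{2}\|G(\z)-\x\|^2$, I would first compute its gradient and Hessian in $\z$. Letting $J_G(\z)$ denote the Jacobian of $G$ at $\z$, the gradient is $\nabla_\z \ell = J_G(\z)^\top(G(\z)-\x)$, and differentiating once more yields the standard Gauss--Newton decomposition of the Hessian,
\[
\nabla_\z^2 \ell(\z,\x) = J_G(\z)^\top J_G(\z) + \sum_{k=1}^m \big(G_k(\z)-x_k\big)\,\nabla_\z^2 G_k(\z).
\]
The first term is always positive semidefinite, while the second (residual) term carries all the potential indefiniteness and is weighted by the reconstruction error $G(\z)-\x$.

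Next I would evaluate this expression at the reference point. Since $\x_* = G(\z_*)$ by hypothesis, the residual $G(\z_*)-\x_*$ vanishes and the entire second term drops out, leaving $\nabla_\z^2 \ell(\z_*,\x_*) = J_G(\z_*)^\top J_G(\z_*)$. It then remains to argue that this matrix is positive definite. Here I would use the local invertibility assumption: reading it as the differential $J_G(\z_*)$ being injective, i.e.\ of full column rank $d$ (recall $d<m$), the Gram matrix $J_G(\z_*)^\top J_G(\z_*)$ is strictly positive definite, so $\ell(\cdot,\x_*)$ has a positive definite Hessian at $\z_*$.

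Finally, to upgrade ``positive definite at a point'' to genuine convexity on a neighborhood, I would invoke continuity. Assuming $G \in C^2$, the map $\z \mapsto \nabla_\z^2 \ell(\z,\x_*)$ is continuous, and since the residual term is bounded by a quantity proportional to $\|G(\z)-\x_*\|$, which vanishes at $\z_*$, the Hessian remains positive definite on a sufficiently small ball around $\z_*$. A twice-differentiable function whose Hessian is positive semidefinite throughout a convex set is convex on that set, which establishes the claim.

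The hard part is the middle step, namely the precise meaning of ``locally invertible.'' For the Gauss--Newton term to be strictly positive definite I need $J_G(\z_*)$ to have full column rank, which is \emph{stronger} than the mere existence of a continuous local inverse: the map $z\mapsto z^3$ is a global homeomorphism yet has a rank-deficient Jacobian at the origin, and there the Gauss--Newton term is only positive semidefinite. I would therefore state explicitly that local invertibility is taken in the sense of an invertible differential (so that the inverse function theorem applies), and I would quantify the radius of the neighborhood by requiring the residual term's operator norm to stay strictly below the smallest eigenvalue of $J_G(\z_*)^\top J_G(\z_*)$, which is exactly what prevents the indefinite contribution from destroying convexity away from the exact minimizer.
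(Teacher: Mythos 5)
Your proposal is correct, and it follows the same core route as the paper: the Gauss--Newton decomposition of the Hessian of $\tfrac12\|G(\z)-\x_*\|^2$, the observation that the residual term vanishes at $\z_*$ because $\x_*=G(\z_*)$, and then an appeal to the invertibility hypothesis to control the remaining Gram matrix. However, your version is noticeably more complete than the paper's own argument, which stops after concluding that $\mathbf J_G(\z_*)\mathbf J_G(\z_*)^\top$ is positive \emph{semi}definite --- a fact that is automatic for any Gram matrix, so the paper's invocation of local invertibility (stated only as $\mathbf J_G(\z_*)\neq \mathbf 0$) does no real work, and semidefiniteness at the single point $\z_*$ does not by itself yield convexity on a neighborhood. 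You repair both weaknesses: reading ``locally invertible'' as an injective (full column rank) differential gives strict positive definiteness of $J_G(\z_*)^\top J_G(\z_*)$, and the continuity argument bounding the residual term's operator norm below the smallest eigenvalue of that Gram matrix legitimately extends positive definiteness to a ball around $\z_*$, which is what ``locally convex'' actually requires. Your caveat about the two readings of invertibility (the $z\mapsto z^3$ example) is exactly the right concern, since a merely homeomorphic local inverse would leave the paper's conclusion, and yours, unprovable by this method. In short: same skeleton as the paper, but your proposal is the rigorous version of it.
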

\begin{proof}
We prove the result stated above by showing that the Hessian of $\ell$ at $\z_*$ is positive semidefinite.
\begin{equation}
\ell(\z, \x_*) = \frac{1}{2}\|G(\z) - \x_* \|^2 = \frac{1}{2}\|G(\z) - G(\z_*)\|^2
\label{eq:l_2_loss}
\end{equation}  

Let $J_G(\z)$ denote the Jacobian of $G(\z)$ and let's compute the Hessian of $\ell$ at $\z_*$:
\begin{align*}
\nabla_\z \ell(\z, \x_*) 
   &=  \mathbf J_G(\z) \left(G(\z) - G(\z_*)\right) \\
\Longrightarrow \nabla_\z^2 \ell(\z, \x_*)
   &=  \nabla_\z^2 G(\z) \left(G(\z) - G(\z_*)\right) + \mathbf J_G(\z) \mathbf J_G(\z)^\top \\
\Longrightarrow \nabla_\z^2 \ell(\z_*, \x_*) 
   &=  \mathbf{0} + \mathbf J_G(\z_*) \mathbf J_G(\z_*)^\top 
\end{align*}
Since $G(\z)$ is assumed to be locally invertible around $\z^*$, then $\mathbf J_G(\z^*) \neq {\bf 0}$ and  the Hessian $\nabla_\z^2 \ell(\z_*)$ is therefore positive semidefinite.
\end{proof}

Note that one could also add an $\ell_2$ regularizer to Equation~\ref{eq:l_2_loss} in order to obtain a locally strongly-convex function.

\section{Random Network Experiments}
\label{sec:random_network}

It is very hard to give quality guarantees for the approximations obtained via generator reversal. Here, we provide experimental evidence by showing that even a DNN generator with random weights $\phi$  can provide reasonable pre-images for data samples. As we argued above, we believe that actual training of $G_\phi$ will improve the quality of pre-images, so this is in a way a worst case scenario. 

Examples for three different image data sets are shown in Figure~\ref{fig:reconstruct:loss}. Here we show the average reconstruction error as a function of the number of gradient update steps. We observe that the error decreases steadily as the reconstruction progresses and reaches a low value very quickly. We also show randomly selected reconstructed samples in Figure~\ref{fig:reconstruct:img}, which reflect the fast decrease in terms of reconstruction error. After only 5 update steps, one can already recognize the general outline of the picture. This is perhaps even more surprising considering that these results were obtained using a generator with \emph{completely random weights}. A similar finding was also reported in~\cite{he2016powerful} which constructed deep visualizations using untrained neural networks initialized with random weights.

\begin{figure}[t]
\vskip 0.2in
    \begin{minipage}{0.45\textwidth}
\begin{center}
\centerline{
    \includegraphics[width=1.2\columnwidth]{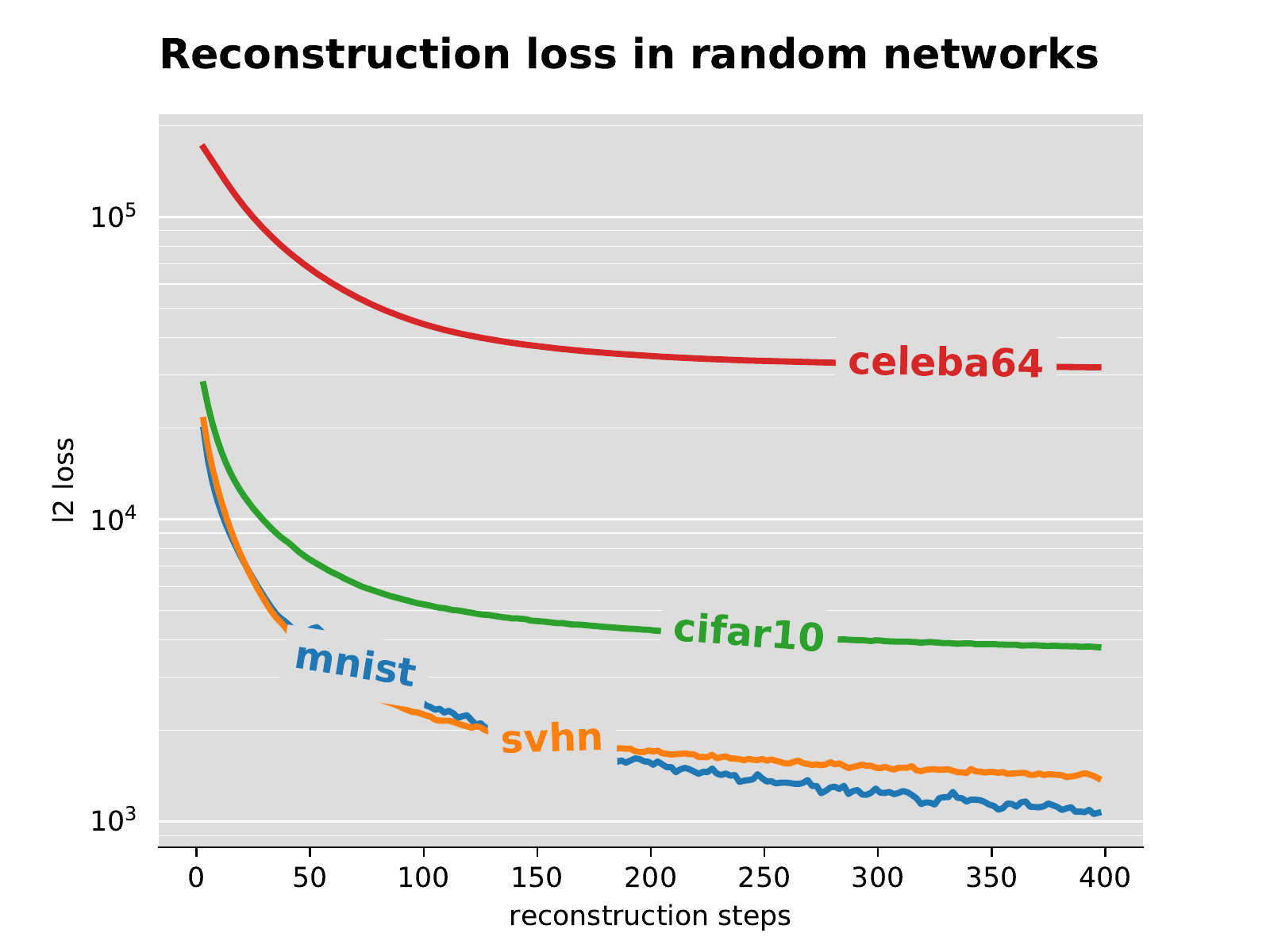}
}
\caption{\small{\it Reconstruction loss in generator networks with random weights.}}
\label{fig:reconstruct:loss}
\end{center}
    \end{minipage}
    \hspace*{1.2cm}
    \begin{minipage}{0.45\textwidth}
\begin{center}
    \vspace*{0.5cm}

\centerline{
    \includegraphics[width=\columnwidth/5]{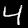}
    \includegraphics[width=\columnwidth/5]{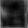}
    \includegraphics[width=\columnwidth/5]{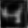}
    \includegraphics[width=\columnwidth/5]{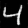}
}
\centerline{
    \includegraphics[width=\columnwidth/5]{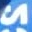}
    \includegraphics[width=\columnwidth/5]{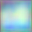}
    \includegraphics[width=\columnwidth/5]{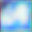}
    \includegraphics[width=\columnwidth/5]{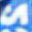}
}
\centerline{
    \includegraphics[width=\columnwidth/5]{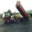}
    \includegraphics[width=\columnwidth/5]{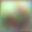}
    \includegraphics[width=\columnwidth/5]{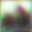}
    \includegraphics[width=\columnwidth/5]{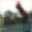}
}
\centerline{
    \includegraphics[width=\columnwidth/5]{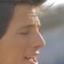}
    \includegraphics[width=\columnwidth/5]{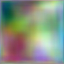}
    \includegraphics[width=\columnwidth/5]{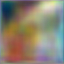}
    \includegraphics[width=\columnwidth/5]{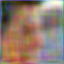}
}

\caption{\small{\it Reconstruction quality using generator networks with random weights. The left column is the original image, followed by reconstructions after 5, 20 and 400 steps.}}
\label{fig:reconstruct:img}
\end{center}
    \end{minipage}
\vskip -0.2in
\end{figure}

\FloatBarrier
\section{Latent space data distribution}
\label{sec:latent_svd}

Figure~\ref{fig:svd:singular_values} shows the distribution of the obtained latent codes after Generator Reversal.
Interestingly, although the distributions are different from the naive prior, they are not indicative of low rank latent data.
This agrees with our expectations, as a well trained generator will make use of all available latent dimensions.

\begin{figure}[ht]
\vskip 0.2in
\begin{center}

\centerline{
    \includegraphics[width=0.45\columnwidth]{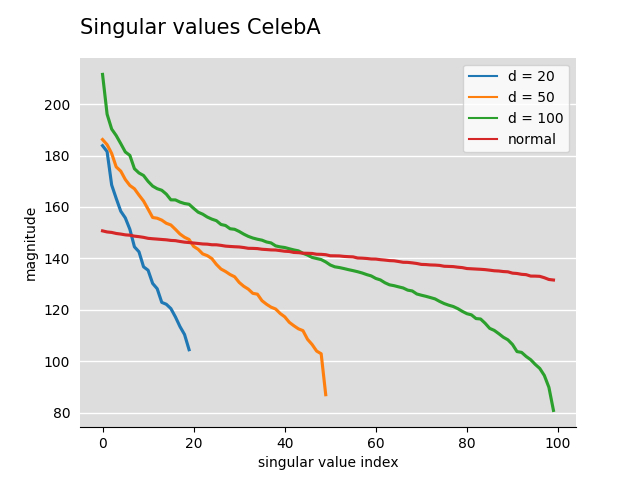}
    \includegraphics[width=0.45\columnwidth]{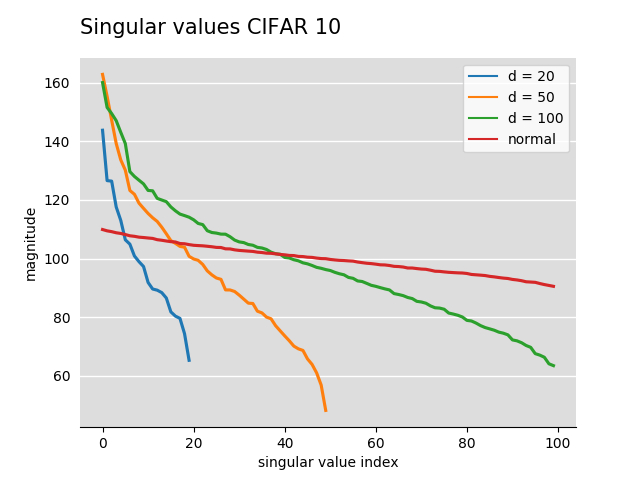}
}
\centerline{
    \includegraphics[width=0.45\columnwidth]{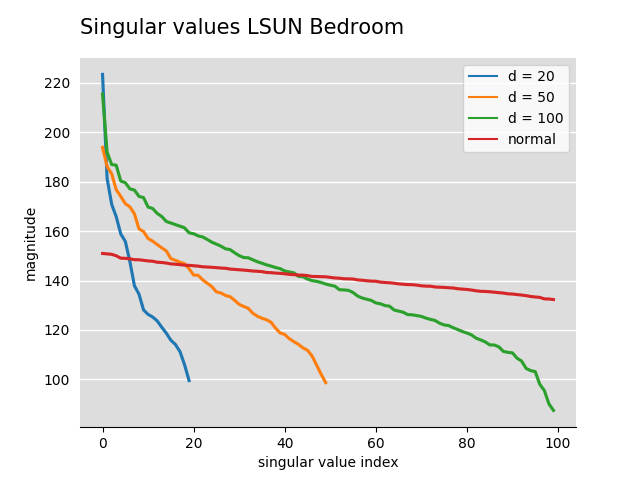}
    \includegraphics[width=0.45\columnwidth]{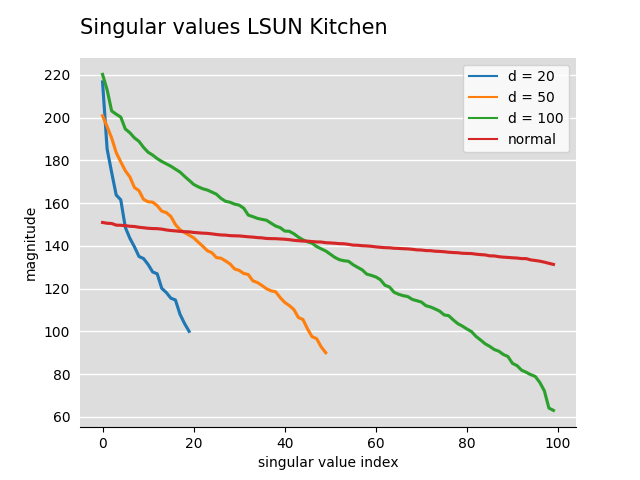}
}

    \caption{\small{\it Distribution of singular values (in GANs using d latent dimensions) used in calculation of the PAG scores. For comparison, singular values of a sample of normally distributed latent codes in 100 dimensions are shown.}}
\label{fig:svd:singular_values}
\end{center}
\vskip -0.2in
\end{figure}

\section{Detailed Experiment Setup}
\label{sec:experimental_setup}

Our experimental setup closely follows popular setups in GAN research in order to facilitate reproducibility and enable qualitative comparisons of results.
Our network architectures are as follows:

The generator samples from a latent space of dimension 20, which is fed through a padded deconvolution to form an initial intermediate representation of $4\times 4\times 512$, which is then fed through four layers of deconvolutions with 512, 256, 128 and 64 filters, followed by a last deconvolution to get to the desired output size and channels.

The discriminator consists of three layers of convolutional layers with 512, 256, 128 and 64 filters, followed by a fully connected layer and a sigmoid classifier.

Both the generator and the discriminator use $4\times 4$ filters with a stride of $2$ in order to up- and downscale the representations, respectively.
The generator employs ReLU non-linearities, except for the last layer, which uses hyperbolic tangent.
The discriminator uses Leaky ReLU non-linearities with a leak of $0.2$, which is standard in the GAN literature.

The PGAN consists of four layers of fully connected units in both the generator and discriminator.
Apart from the layers being fully connected, the architecture is analogous to the original GAN.

We use RMSProp\citep{tieleman2012lecture} with a step size of $0.0003$ and mini-batches of size 100 for optimization for all networks.

For the generator reversal process, we use a learning rate of $0.05$.
The initial noise vectors are sampled from a normal distribution with $\sigma = 0.0001$.

We train until we can no longer see any significant qualitative improvement in the generated images or any quantitative improvement in the inception score (if available).

For the CelebA dataset, we crop the images to a size of $118\times 118$ pixels, after which we resize them to $64\times 64$ pixels.

\end{document}